\definecolor{darkblue}{rgb}{0, 0, 0.5}
\newtcolorbox{callout}[1][]{%
    enhanced,
    colback=blue!5,
    colframe=blue!55!black,
    arc=0.5mm,
    left=3mm,
    right=3mm,
    attach boxed title to top left={
        xshift=3mm,
        yshift=-2mm
    },
    boxed title style={
        size=title,
        colback=blue!55!black,
        colframe=blue!55!black,
        arc=0.5mm
    },
    title={#1},
    fonttitle=\sffamily\color{white},
    #1
}
\newcommand{\citeposs}[1]{\citeauthor{#1}'s (\citeyear{#1})}
\theoremstyle{plain}
\newtheorem{thm}{Theorem}[section]
\newtheorem{prop}[thm]{Proposition}
\theoremstyle{definition}
\newcommand{\N}[0]{\mathbb{N}}
\newcommand{\R}[0]{\mathbb{R}}
\newcommand{\normal}[0]{\mathcal{N}}
\newcommand{\quadratic}[0]{\mathcal{Q}}
\newcommand{\E}[0]{\mathbb{E}}
\newcommand{\M}[0]{\mathbb{M}}
\newcommand{\Prob}[0]{\mathbb{P}}
\newcommand{\bx}[0]{\pmb{x}}
\newcommand{\bX}[0]{\pmb{X}}
\newcommand{\bbX}[0]{\mathbb{X}}
\newcommand{\cX}[0]{\mathcal{X}}
\newcommand{\bbY}[0]{\mathbb{Y}}
\newcommand{\cY}[0]{\mathcal{Y}}
\newcommand{\bu}[0]{\pmb{u}}
\title{
    Hyperparameter Loss Surfaces Are Simple Near their Optima
}
\author{%
    Nicholas Lourie \\
    New York University \\
    \texttt{nick.lourie@nyu.edu}
    \And
    He He \\
    New York University \\
    \texttt{hhe@nyu.edu}
    \And
    Kyunghyun Cho \\
    New York University \& Genentech \\
    \texttt{kyunghyun.cho@nyu.edu}
}
\begin{document}

\ifcolmsubmission
\linenumbers
\fi

\maketitle

\begin{abstract}
    Hyperparameters greatly impact models' capabilities; however, modern models are too large for extensive search. Instead, researchers design recipes that train well across scales based on their understanding of the hyperparameters. Despite this importance, few tools exist for understanding the hyperparameter loss surface. We discover novel structure in it and propose a new theory yielding such tools. The loss surface is complex, but as you approach the optimum simple structure emerges. It becomes characterized by a few basic features, like its effective dimension and the best possible loss. To uncover this \textit{asymptotic regime}, we develop a novel technique based on random search. Within this regime, the best scores from random search take on a new distribution we discover. Its parameters are exactly the features defining the loss surface in the asymptotic regime. From these features, we derive a new asymptotic law for random search that can explain and extrapolate its convergence. These new tools enable new analyses, such as confidence intervals for the best possible performance or determining the effective number of hyperparameters. We make these tools available at \url{https://github.com/nicholaslourie/opda}.
\end{abstract}

\begin{figure}[h]
    \centering
    \includegraphics[width=\linewidth]{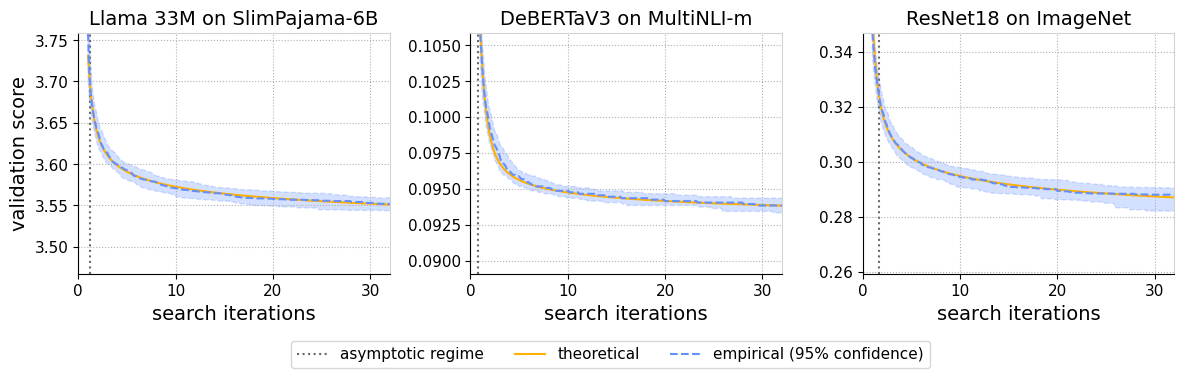}
    \caption{
        The hyperparameter loss surface has simple structure near the optimum. Using this structure, we can reason about how the validation score will improve as we run an algorithm like random search. The plots compare the \textit{theoretical} functional form against the \textit{empirical} rate of progress using 1,024 training runs in each. The ground truth \textit{(dashed blue)} closely adheres to the theoretical form \textit{(solid yellow)}, with that form remaining fully inside its 95\% confidence bands. \textit{Across all three scenarios---language model pretraining (log loss), supervised finetuning (error rate), and image classification (error rate)---the simple structure near the optimum drives the practical outcomes of hyperparameter search after just 1 or 2 iterations.}
    }
    \label{fig:assessing-goodness-of-fit_tuning-curve-detail}
\end{figure}

\section{Introduction}
\label{sec:introduction}

Hyperparameters affect every aspect of foundation models, from efficiency to generalization. Unfortunately, extensive hyperparameter search becomes impossible at scale. Instead, researchers must design recipes that work across scales based on their understanding of the hyperparameters. Still, few tools exist for understanding the hyperparameter loss surface.

We want to understand the surface, but it is too expensive to explore it---instead we must exploit its structure. We need an empirical theory of the hyperparameter loss surface. A similar theory, scaling laws, has had tremendous success at balancing data and compute without search \citep{rosenfeld2020a, kaplan-etal-2020-scaling, hoffman2022training}; however, scale is just one aspect of a model's design. Researchers must also consider things like the data mix, regularization, and architecture. $\mu$Transfer offers insights on optimization hyperparameters such as the initialization and the learning rate \citep{yang2021tuning}; nonetheless, it does not offer a general theory of the loss surface or reveal whether it has deeper structure.

We discover such structure exists and propose a new theory describing it. While the hyperparameter loss surface is complex, as you approach the optimum simple structure emerges. In a large area around it, the surface's structure is dominated by a few basic features: the effective dimension, the noise due to random seeds, and the best possible loss. Here, in the \textit{asymptotic regime}, the surface looks like a quadratic polynomial with additive, normally distributed noise.

To find the asymptotic regime, we look for the shadow it casts on random search. In it, the scores from random search take on a specific distribution: the \textit{noisy quadratic}. So, we look for a threshold after which the scores follow this distribution. Imagine you run random search, obtain validation scores, and plot their distribution, then you will see the distribution's tail matches a noisy quadratic. The point at which they match defines the asymptotic regime.

The noisy quadratic comes from a new limit theorem we prove for random search. Under regularity conditions, the best validation scores will follow a noisy quadratic. This theorem explains and extrapolates how random search converges---uncovering its asymptotic law.

Beyond search, the noisy quadratic captures properties of the hyperparameter loss surface. Each of the distribution's parameters corresponds to a different one: the effective number of hyperparameters, the variance due to random seeds, the concentration of probability near the optimum, and the best hyperparameters' loss. By fitting the distribution, we can estimate these properties. Thus, the noisy quadratic lets us find both where simple structure emerges and what it looks like. Better still, since the noisy quadratic is a parametric distribution, we can construct confidence intervals for these properties using maximum likelihood theory.

Of course, a good theory must reflect the data. We validate our theory in three practical scenarios: language model pretraining, supervised finetuning, and image classification (\S\ref{sec:testing-the-theory}). Training 1,024 models in each, we test that our theoretical form fits the empirical distribution from random search (\S\ref{sec:testing-the-theory:assessing-goodness-of-fit}). In all three scenarios, the theoretical form adheres closely to the ground truth. Moreover, the asymptotic regime is always large---occupying a third to half of the search space. Beyond fit, we test our assumptions: is the noise actually normal with constant variance (\S\ref{sec:testing-the-theory:testing-additive-normal-errors})? In fact, it converges to normality long before the asymptotic regime, and while the variance begins inflated it quickly converges to a constant. Last, we test our theory's application. In each scenario, we extrapolate how random search will converge based on the first 48 search iterations (\S\ref{sec:testing-the-theory:estimating-and-extrapolating-the-tuning-curve}). While our point estimates mostly smooth their nonparametric baselines, the confidence bands show a dramatic improvement.

With our theory, researchers can identify the asymptotic regime, understand its structure, extrapolate how random search will converge, and infer properties of the hyperparameter loss surface---all while quantifying their uncertainty. So that others may use these tools in their own research, we make them available at: \url{https://github.com/nicholaslourie/opda}.

\begin{callout}[title=Simple Structure Emerges Around the Optimum]
    \begin{minipage}{0.346\textwidth}
        \textbf{Simple Structure.} In a large area around the optimum, the hyperparameter loss surface is approximately a quadratic polynomial plus noise which is normally distributed with constant variance.\strut
    \end{minipage}\hfill
    \begin{minipage}{0.346\textwidth}
        \textbf{Around the optimum.} This structure emerges for all the points whose loss is better than some threshold. Beyond this threshold, the scores from random search follow a noisy quadratic distribution \textit{(right)}.
    \end{minipage}\hfill
    \begin{minipage}{0.265\textwidth}
        \centering
        \includegraphics[height=7.5\baselineskip, cfbox=blue!55!black 1pt 0pt 0pt]{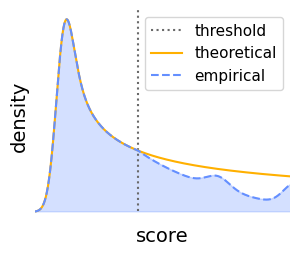}
    \end{minipage}
\end{callout}

\section{A Theory Based on Simple Structure}
\label{sec:a-theory-based-on-simple-structure}

Our theory starts with a simple claim: near the best hyperparameters, the loss surface is approximately quadratic with additive normal noise. This structure emerges for those hyperparameters ($\bx$) whose expected loss ($Y$) is better than some threshold: $\E[Y\mid \bx ] \leq \theta$, and that threshold defines the \textit{asymptotic regime}. But, how do we find it?

Finding the asymptotic regime in a high dimensional search space would be challenging; instead, we look for the effect it leaves upon random search. Within the asymptotic regime, the scores found by random search follow a new parametric family: the noisy quadratic distribution. By finding where these scores converge to it, we find the threshold defining the asymptotic regime. The noisy quadratic's parameters then let us infer other properties of the loss surface, such as the best possible performance and its effective dimension.

Here, we derive these results informally, for proofs see \S\ref{app:proofs-and-theorems}.

\subsection{Formalizing Hyperparameter Search}
\label{sec:a-theory-based-on-simple-structure:formalizing-hyperparameter-search}

Imagine fitting a neural network---perhaps pretraining a language model.\footnote{This section closely follows our formalization in \citet{lourie-etal-2024-show} (\S3.1).} Many choices must be made: Which architecture? What learning rate? Each one is a hyperparameter, and each hyperparameter takes a value from the \textit{hyperparameter search space}, $\bx = [x_1, \ldots, x_d] \in \bbX$. Evaluating the network then produces a score, $y \in \bbY \subset \R$, such as cross-entropy. This score is a random variable that depends on the initialization and the data order as well as the hyperparameters: $Y \mid \bx$. Its conditional distribution defines the \textit{hyperparameter loss surface}, and in particular we will be interested in its mean: $g(\bx) = \E[Y |\bx]$.

To optimize it, we use a hyperparameter tuning algorithm. Each round, the algorithm (randomly) selects a hyperparameter configuration, $\bX_i$, evaluates it to obtain a score, $Y_i$, then keeps the best one found so far:
\begin{equation}\label{eq:tuning-process}
    T_k \coloneqq \min_{i=1\ldots k} Y_i
\end{equation}
We can think of $k \to T_k$ as a random function, which we call the \textit{tuning process}.

The tuning process captures the whole distribution of outcomes from hyperparameter search; however, it is a complex, high dimensional object. Instead of the whole distribution, it is convenient to consider a summary like the median. Letting $\M[X]$ be the median of $X$, the \textit{tuning curve} is the function, $\tau : \R_{>0} \to \bbY$:\footnote{Following \citet{lourie-etal-2024-show}, we extend the definition of the tuning curve from $k\in \N$ to all $k\in \R_{>0}$ by defining $T_k$ as the random variable with CDF $F(y)^k$.}
\begin{equation}\label{eq:tuning-curve}
    \tau(k) \coloneqq \M[T_k]
\end{equation}
The tuning curve concisely describes how hyperparameter search might progress. In general, it depends on the model, the search space, and the search algorithm. Holding the model fixed, we can compare the efficiency of different tuning algorithms; holding the \textit{algorithm} fixed, we can compare the difficulty of tuning different \textit{models}.

A simple standard for comparing models is \textit{random search}. In it, we sample hyperparameter configurations from a fixed \textit{search distribution}, $\bX_i \sim \cX$. Since configurations are sampled independently, their scores are independent as well. In essence, the scores come from a fixed \textit{score distribution}, $Y_i \sim \cY$. By analyzing this distribution, we can understand the whole tuning process. For example, the best loss after $k$ rounds is just the minimum of $k$ draws, and the minimum's cumulative distribution function (CDF), $F_k(y) = \Prob(T_k \leq y)$, has a simple relationship to the CDF of one sample, $F(y) = \Prob(Y_i \leq y)$:
\begin{equation}\label{eq:cdf-of-min}
    F_k(y)
      = 1 - \Prob\left(\min_{i=1\ldots k} Y_i > y\right)
      = 1 - \prod_{i=1\ldots k} \Prob(Y_i > y)
      = 1 - (1 - F(y))^k
\end{equation}
Thus, the distribution from one round of random search defines the distribution from $k$ rounds. If we can estimate the distribution's tail, then we can extrapolate infinitely into the future---capturing the entire tuning process. To accomplish this, we need to find an appropriate parametric form.

\subsection{The Deterministic Case}
\label{sec:a-theory-based-on-simple-structure:the-deterministic-case}

\begin{figure}
    \centering
    \includegraphics[width=\linewidth]{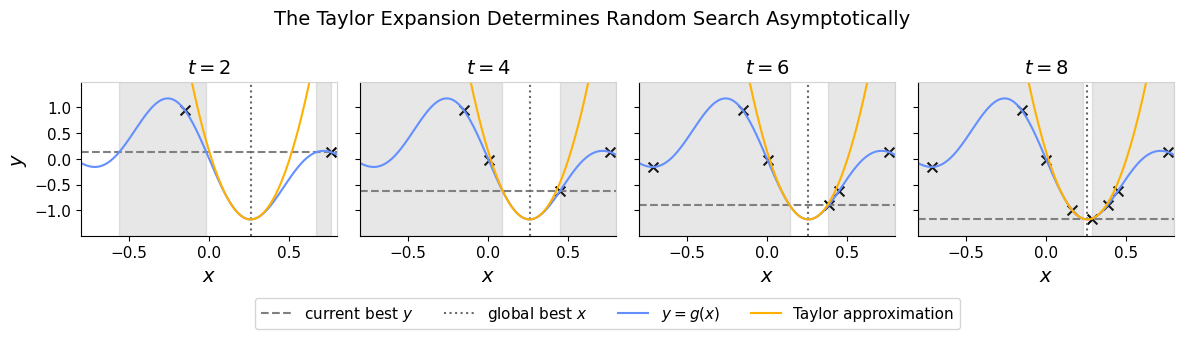}
    \caption{
        With more iterations, random search finds better hyperparameters. As the best score improves, the region of better hyperparameters shrinks around the optimum; thereby, \textit{the Taylor polynomial gives better approximations at the hyperparameters that improve the score.}
    }
    \label{fig:random-search}
\end{figure}

Imagine the loss surface has no noise---that evaluating hyperparameters is deterministic. How would random search behave? At any point in time, the only hyperparameters that matter are the ones that might improve the loss. As search continues, you find better hyperparameters, and the region of even better ones shrinks about the optimum. As this region shrinks, the Taylor polynomial becomes a better approximation within it. Figure~\ref{fig:random-search} illustrates this idea.

At the optimum, $(\bx_*, y_*)$, the gradient is $\pmb{0}$ so the Taylor series is given by the Hessian, $H_{\bx_*}$:
\begin{equation}\label{eq:taylor-approximation}
    g(\bx) \approx y_* + \frac{1}{2} (\bx - \bx_*)^T H_{\bx_*} (\bx - \bx_*)
\end{equation}
At the same time, any continuous probability density is roughly constant on a small enough interval; thus, near the optimum, the search distribution is approximately uniform.

Putting these facts together, we derive the tail of the score distribution via a geometric argument. Consider the event $Y = g(\bX) \leq y$. Rearranging the Taylor approximation gives:
\begin{equation}\label{eq:ellipse-inequality}
    \frac{1}{2} (\bx - \bx_*)^T H_{\bx_*} (\bx - \bx_*) \leq y - y_*
\end{equation}
Since $\bx_*$ is a minimum, the Hessian is positive semi-definite. If we marginalize out the null dimensions, then this equation defines an ellipsoid. $\Prob(Y \leq y)$ is proportional to its volume, which is proportional to $(y - y_*)^{\sfrac{d_*}{2}}$, where $d_*$ is the rank of the Hessian. Thus, as $y \to y_*$:
\begin{equation}\label{eq:cdf-approximation}
    F(y) = \Prob(Y \leq y) \propto \left(y - y_*\right)^{\sfrac{d_*}{2}}
\end{equation}
Motivated by this analysis, we define the \textit{convex quadratic distribution}, $\quadratic_{\min}(\omega, \beta, \gamma)$, by $F(y; \omega, \beta, \gamma) \coloneqq \omega (y - \alpha)^{\sfrac{\gamma}{2}}$. Usually, we prefer an alternative parametrization. Let $\beta$ be the maximum of the distribution's support. $F(\beta) = \omega (\beta - \alpha)^{\sfrac{\gamma}{2}} = 1$, so $\omega = (\beta - \alpha)^{-\sfrac{\gamma}{2}}$ thus:
\begin{equation}\label{eq:quadratic-distribution_cdf}
    F(y; \alpha, \beta, \gamma) \coloneqq \left(\frac{y - \alpha}{\beta - \alpha}\right)^{\sfrac{\gamma}{2}}
\end{equation}
We can differentiate the CDF to obtain the probability density function (PDF):
\begin{equation}\label{eq:quadratic-distribution_pdf}
    f(y; \alpha, \beta, \gamma) = \frac{\gamma}{2(\beta - \alpha)} \left(\frac{y - \alpha}{\beta - \alpha}\right)^{\frac{\gamma - 2}{2}}
\end{equation}
Each parameter relates to a different aspect of the hyperparameter loss surface: $\alpha$ is \textit{the best possible score}, $\beta$ measures how \textit{concentrated} the distribution is near the minimum, and $\gamma$ is \textit{the effective number of hyperparameters}---which is always less than the nominal number.

In summary, we derive a new parametric family: \textit{the quadratic distribution}. This family describes the score distribution's tail when optimizing a deterministic function via random search. For \textit{minimization}, the \textit{left} tail approaches the \textit{convex} quadratic distribution; for \textit{maximization} the \textit{right} tail approaches the \textit{concave} quadratic distribution. We give formulas for both in \S\ref{app:the-quadratic-distribution}. For more details on the derivation, see \S\ref{app:deriving-the-quadratic-distribution}.

\subsection{The Stochastic Case}
\label{sec:a-theory-based-on-simple-structure:the-stochastic-case}

The validation score is rarely a deterministic function of the hyperparameters. More often, the score varies due to random factors such as the initialization and data order.

Still, even when the scores are random, their conditional mean, $\E[Y|\bX]$, is not. We could apply our previous analysis to the mean, but we would need to bridge the gap between it and the actual observations. Taking inspiration from classic regression analysis, we might consider whether $Y$ varies about the mean with additive noise. Let $g(\bX) = \E[Y|\bX]$, then we assume $Y = g(\bX) + E$ where $E \sim \normal(0, \sigma)$. This simple assumption seems too good to be true, but in fact it gives a great fit to the data (\S\ref{sec:testing-the-theory:assessing-goodness-of-fit}). Even more surprisingly, if you retrain the same hyperparameters many times, the scores do in fact become normally distributed with constant variance as you enter the asymptotic regime (\S\ref{sec:testing-the-theory:testing-additive-normal-errors}). Thus, additive noise offers a realistic model for this random variation.

Assuming $Y = g(\bX) + E$, the tail of $g(\bX)$ converges to a quadratic distribution, so:
\begin{equation}\label{eq:noisy-quadratic-distribution_definition}
    Y \approx Q + E,\qquad Q \sim \quadratic_{\min}(\alpha, \beta, \gamma),\;  E \sim \normal(0, \sigma)
\end{equation}
This sum defines a new family: the \textit{noisy quadratic distribution}, $\quadratic(\alpha, \beta, \gamma, \sigma)$. It comes in two variants: the \textit{convex} ($\quadratic_{\min}$) and \textit{concave} ($\quadratic_{\max}$) noisy quadratic distributions. Moreover, when $\sigma = 0$, we recover the (noiseless) quadratic distribution as a special case.\footnote{When the variant is clear from context, we write the distribution unadorned: $\quadratic(\alpha, \beta, \gamma, \sigma)$. We differentiate the quadratic, $\quadratic(\alpha, \beta, \gamma)$, and noisy quadratic, $\quadratic(\alpha, \beta, \gamma, \sigma)$, by the presence of $\sigma$.}

Let $\Phi$ be the CDF of the standard normal distribution. The CDF of the noisy quadratic is:
\begin{equation}\label{eq:noisy-quadratic-distribution_cdf}
    F(y; \alpha, \beta, \gamma, \sigma) = \Phi\left(\frac{y - \beta}{\sigma}\right) + \E_0^1\left[V^{\sfrac{\gamma}{2}}\right],\qquad V \sim \normal\left(\frac{y - \alpha}{\beta - \alpha}, \frac{\sigma}{\beta - \alpha}\right)
\end{equation}
The noisy quadratic's PDF is:
\begin{equation}\label{eq:noisy-quadratic-distribution_pdf}
    f(y; \alpha, \beta, \gamma, \sigma) = \frac{\gamma}{2(\beta - \alpha)} \E_0^1\left[V^{\frac{\gamma - 2}{2}}\right],\qquad V \sim \normal\left(\frac{y - \alpha}{\beta - \alpha}, \frac{\sigma}{\beta - \alpha}\right)
\end{equation}
Thus, we can express the noisy quadratic distribution's CDF and PDF in terms of properties of the normal distribution. For mathematical details, see \S\ref{app:proofs-and-theorems:the-stochastic-case}. Equations~\ref{eq:noisy-quadratic-distribution_cdf}~and~\ref{eq:noisy-quadratic-distribution_pdf} require numerical methods to evaluate, so we provide robust implementations in our library \href{https://github.com/nicholaslourie/opda}{opda}.

In summary, we extend the quadratic distribution to a more general family: \textit{the noisy quadratic distribution}. This family describes the score distribution---and thus the outcomes---of random search in typical deep learning scenarios. When \textit{minimizing}, the score distribution's \textit{left} tail converges to a \textit{convex} noisy quadratic; when \textit{maximizing}, its \textit{right} tail converges to a \textit{concave} noisy quadratic. See \S\ref{app:the-noisy-quadratic-distribution} for formulas for both.

\subsection{Applying the Theory in Practice}
\label{sec:a-theory-based-on-simple-structure:applying-the-theory-in-practice}

The noisy quadratic distribution is a powerful tool for studying neural networks. With it, we can answer two types of questions: we can use it to reason about random search, and we can use random search to understand the hyperparameter loss surface.

For random search: its convergence is described by the tuning curve, and the tuning curve is determined by the noisy quadratic. In particular, random search is fast when the effective number of hyperparameters ($\gamma$) is low.\footnote{In our derivation (\S\ref{sec:a-theory-based-on-simple-structure:the-deterministic-case}), the effective number of hyperparameters is the rank of the Hessian.} For a given model, we can find how easy it is to tune by estimating its tuning curve using the noisy quadratic (e.g., see \S\ref{sec:testing-the-theory:estimating-and-extrapolating-the-tuning-curve}).

For the hyperparameter loss surface: its most important properties are captured by the noisy quadratic's parameters. When minimizing (maximizing), $\alpha$ ($\beta$) is the average score of the best hyperparameters and $\gamma$ is the effective number of them. We can estimate these quantities by fitting the noisy quadratic. Moreover, we can use maximum likelihood theory to test hypotheses and create confidence intervals for them (e.g., via a likelihood ratio test). For example, you could test if adding a new hyperparameter increases the effective number.

To apply our theory, you must pick a search space and search distribution. A few tips. With a small number of discrete hyperparameters, you can evenly stratify over them to ensure good coverage. More generally, the noisy quadratic emerges when there exist coordinates in which the hyperparameters are uniform and the loss surface adheres to its 2nd order Taylor polynomial. Sampling on the right scales really helps. Start bounded hyperparameters on a logit, positive ones on a log, and real-valued ones on a linear scale. Afterwards, adjust as necessary. Sample uniformly (on that scale) between the bounds you would normally choose for a grid search. Tighter bounds speed up convergence, but they still must contain the optimum. To find the asymptotic regime, fit the noisy quadratic to the empirical CDF (eCDF) with several thresholds until you find the loosest one that gives a good visual fit.\footnote{For an example, see the \href{https://nbviewer.org/github/nicholaslourie/opda/blob/v0.8.0/nbs/experiments/validating-the-parametric-analysis-in-practice.ipynb\#Extrapolating-Random-Search}{Extrapolating Random Search} section of the \href{https://github.com/nicholaslourie/opda/blob/v0.8.0/nbs/experiments/validating-the-parametric-analysis-in-practice.ipynb}{Validating the Parametric Analysis in Practice} notebook in \href{https://github.com/nicholaslourie/opda/tree/v0.8.0}{opda (v0.8.0)}.}

\section{Experimental Setup}
\label{sec:experimental-setup}

To test our theory, we run random search on representative scenarios: Llama 33M \citep{touvron2023llamaopenefficientfoundation} on SlimPajama \citep{cerebras2023slimpajama} for language modeling, DeBERTaV3 \citep{he2023debertav3} on MultiNLI \citep{williams2018broad} for supervised finetuning, and ResNet18 \citep{he2016deep} on ImageNet \citep{russakovsky2015imagenet} for vision pretraining. To fully capture the score distribution, we train 1,024 separate hyperparameter configurations in each.

Llama 33M is a smaller variant of Llama, a causal transformer \citep{vaswani2017attention}. We pretrain it on SlimPajama-6B,\footnote{https://huggingface.co/datasets/DKYoon/SlimPajama-6B} a subset of the SlimPajama web corpus. We tune the learning rate, $\beta_1$ and $\beta_2$ for Adam \citep{kingma2015adam}, warmup steps, weight decay, and dropout.

DeBERTaV3 is a pretrained BERT-like model \citep{devlin-etal-2019-bert}. In \citet{lourie-etal-2024-show}, we finetuned it on MultiNLI, a natural language inference benchmark. We tuned the learning rate, fraction of first epoch for warmup, batch size, number of epochs, and dropout.

ResNet18 is a convolutional network with residual connections. We train it with momentum SGD and a 1-cycle policy \citep{smith2018super} on ImageNet, an image classification benchmark used for vision pretraining. We tune the learning rate, peak epoch, momentum, batch size, epochs, weight decay, label smoothing, and blurpool (an architectural parameter).

For the models' search distributions see \S\ref{app:models-and-search-distributions}. Our analysis and random search results (including sampled configurations and full learning curves) are available in \href{https://github.com/nicholaslourie/opda/tree/v0.8.0}{opda (v0.8.0)}.

We validate our theory in three parts.

\textbf{Assessing Goodness of Fit.} We confirm the noisy quadratic matches the score distribution from random search. To do so, we estimate the score distribution using the eCDF and the nonparametric confidence bands from \citet{lourie-etal-2024-show}. We then fit the noisy quadratic to its tail via censored maximum spacing estimation \citep{cheng1983estimating, ranneby1984maximum}. We select the threshold for the asymptotic regime via visual diagnostics.

\textbf{Testing Additive Normal Errors.} We verify our strongest assumptions: normality and homoskedasticity of the variation due to random seeds. For this, we take the ResNet18 results, pick the configurations scoring at the 12.5th, 25th, up to 100th percentile, and retrain each 128 times to obtain large samples from score distributions with fixed hyperparameters.

\textbf{Estimating and Extrapolating the Tuning Curve.} To demonstrate how to use our results to evaluate a model, we subsample 48 iterations of random search without replacement from the full 1,024 for each model. For the ground truth eCDF, we use all 1,024 iterations. For nonparametric estimates, we construct the eCDF and \citeposs{lourie-etal-2024-show} confidence bands from the subsample. For parametric estimates, we select the asymptotic regime via visual diagnostics using only the subsample, fit the noisy quadratic distribution to the tail via censored maximum spacing estimation \citep{cheng1983estimating, ranneby1984maximum}, and compute parametric confidence bands from the nonparametric ones as consonance regions \citep{easterling1976goodness}. We compute these via brute-force search with a grid of 64 log-spaced values for $\sigma$, 128 linearly spaced values for $\alpha$, and 256 linearly spaced values for $\beta$.

\begin{figure}[t]
    \centering
    \includegraphics[width=\linewidth]{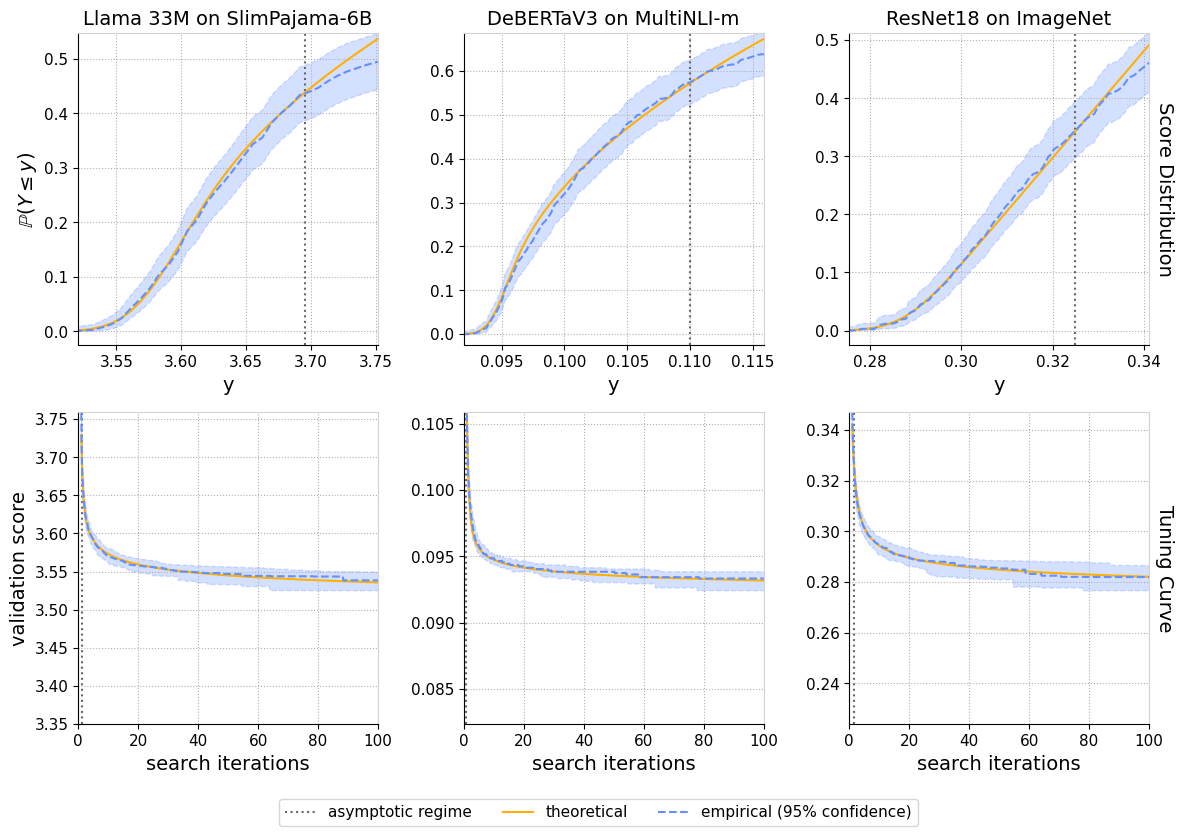}
    \caption{
        A comparison of the score distribution (\textit{empirical}) and noisy quadratic (\textit{theoretical}). The top row depicts CDFs, the bottom row depicts tuning curves. Each column corresponds to a different scenario: pretraining Llama 33M on SlimPajama-6B (cross-entropy), finetuning DeBERTaV3 on MultiNLI (error rate), and training ResNet18 on ImageNet (error rate). The \textit{asymptotic regime} is the performance threshold beyond which the theoretical approximations apply. All estimates use the scenarios' full 1,024 iterations of random search. \textit{In the asymptotic regime, the noisy quadratic distribution matches the score distribution from random search.}
    }
    \label{fig:assessing-goodness-of-fit}
\end{figure}

\section{Testing the Theory}
\label{sec:testing-the-theory}

\subsection{Assessing Goodness of Fit}
\label{sec:testing-the-theory:assessing-goodness-of-fit}

Our main claim is that simple structure in the hyperparameter loss surface determines practical outcomes from search. Using this structure, we derived theoretical forms for the score distribution and tuning curve. Figure~\ref{fig:assessing-goodness-of-fit} compares these forms to what you actually observe.

Across three scenarios, the empirical and theoretical distributions show an excellent fit. In each, both the noisy quadratic's CDF and its median tuning curve closely adhere to the ground truth. They remain within the 95\% confidence bands at all times. And, as theory predicts, the point estimates fit the ground truth almost perfectly in the asymptotic regime.

Besides showing our assumptions are satisfied, these results show the theory is practically relevant. It is not enough for the hyperparameter loss surface to have structure at the optimum; that structure must occupy enough space around it to be useful. Figure~\ref{fig:assessing-goodness-of-fit} confirms this. In each scenario, a large fraction of the score distribution falls within the asymptotic regime: 44\% for Llama 33M, 57\% for DeBERTaV3, and 34\% for ResNet18. These search spaces are broad, typical of what practitioners might use when tuning a new model. Still, almost the entire tuning curve falls in the asymptotic regime, from the first few iterations. These results also generalize when you use the same search space on new architectures (see \S\ref{app:generalization-across-architectures}). Empirically, the asymptotic regime explains much of the behavior you see in practice.

\subsection{Testing Additive Normal Errors}
\label{sec:testing-the-theory:testing-additive-normal-errors}

\begin{figure}[t]
    \centering
    \includegraphics[width=\linewidth]{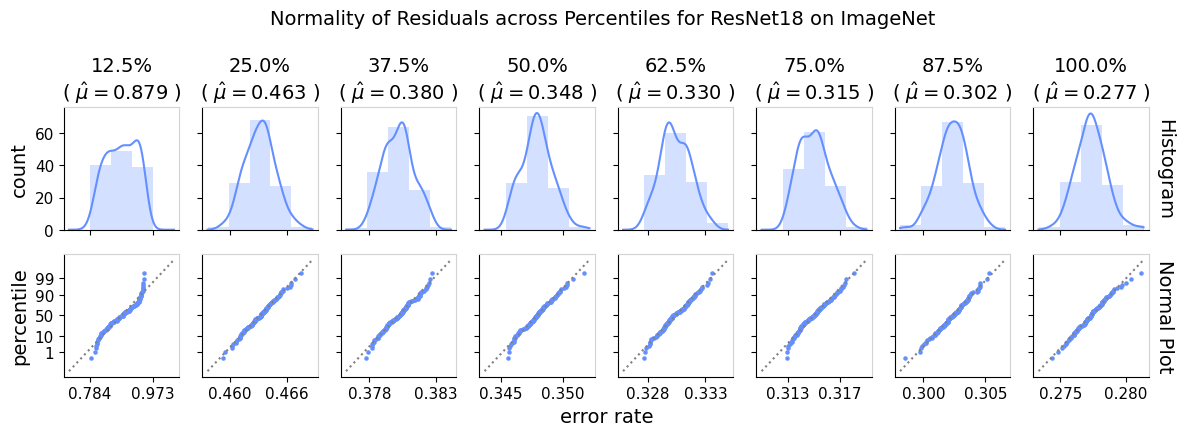}
    \caption{
        Diagnostics for the score distribution's normality given fixed hyperparameters. The top row shows histograms with kernel density estimates; the bottom shows Q-Q plots. Columns represent configurations across error rate percentiles for ResNet18 on ImageNet. \textit{All except the worst performing hyperparameters demonstrate a very high degree of normality.}
    }
    \label{fig:testing-normality}
\end{figure}

\begin{figure}[t]
    \centering
    \includegraphics[width=\linewidth]{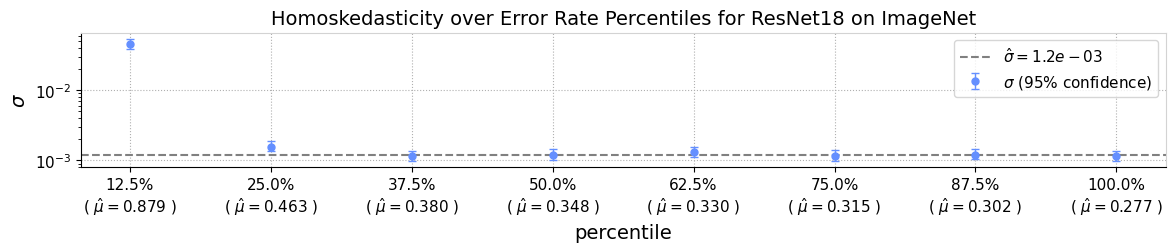}
    \caption{
        A comparison of standard deviations for the score given fixed hyperparameters. The $x$-axis gives configurations at different error rate percentiles for ResNet18 on ImageNet. Points are standard deviations at those percentiles. Confidence intervals are simultaneous. \textit{The standard deviation quickly converges to a constant long before the asymptotic regime.}
    }
    \label{fig:testing-homoskedasticity}
\end{figure}

If you fix the hyperparameters, are the scores really normal with constant variance? Often, bad hyperparameters are unstable so we only expect this structure after the scores exceed some threshold (the asymptotic regime). We take search results from ResNet18, pick the configurations at the 12.5\%, 25\%, to 100\% best error rates, and retrain each 128 times. The claim has two parts: the scores are normally distributed, and their variances are constant.

\textbf{Testing Normality.} We test normality using the venerable normal probability plot. In addition, we show histograms and kernel density estimates (KDEs) to offer a more intuitive visualization. Figure~\ref{fig:testing-normality} displays the results. On the top, besides the worst hyperparameters, the distributions exhibit a familiar bell curve. The normal probability plots are even more decisive. On the bottom, the sample quantiles almost all fall on the $y = x$ line, corresponding to the quantiles of the normal. Each plot from 25\% on up displays a high degree of normality.

\textbf{Testing Constant Variance.} We test constant variance by plotting simultaneous confidence intervals for the standard deviation across the error rates. As we have confirmed normality, we use the \smash{$\chi^2$} confidence interval for the standard deviation of a normal distribution; as the intervals are independent, we make them simultaneous via a \smash{{\v S}id{\' a}k} correction. Figure~\ref{fig:testing-homoskedasticity} shows the result. The standard deviation drops to a constant around 37.5\%. From then on, all confidence intervals contain a common value (1.2e{-}3, the mean of the last three points). The intervals are small, so it is unlikely any large differences exist. Thus, the standard deviation begins inflated, but converges to a constant long before the asymptotic regime.

Both analyses suggest the same conclusion: bad hyperparameters exhibit bad structure, but as hyperparameters improve---as you approach the optimum---simple structure emerges.

\subsection{Estimating and Extrapolating the Tuning Curve}
\label{sec:testing-the-theory:estimating-and-extrapolating-the-tuning-curve}

\begin{figure}[t]
    \centering
    \includegraphics[width=\linewidth]{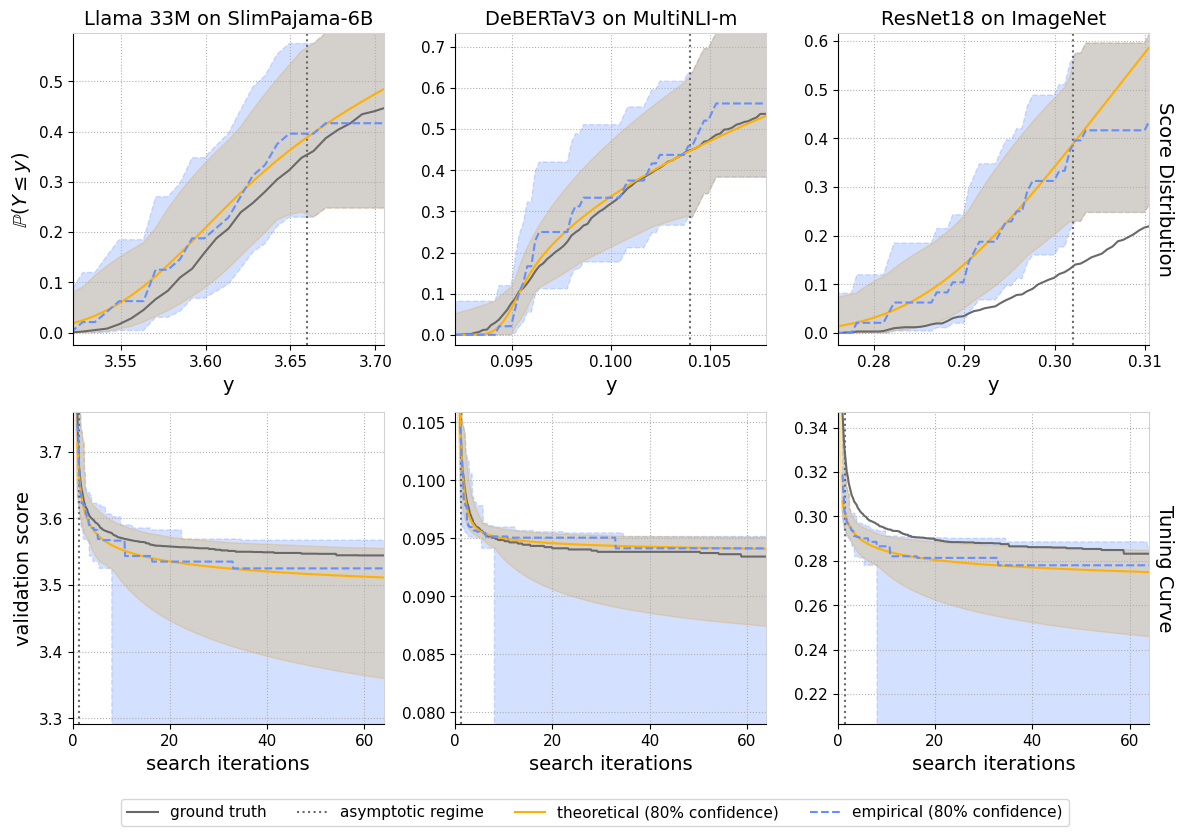}
    \caption{
        Examples of nonparametric (\textit{empirical}) and noisy quadratic (\textit{theoretical}) estimates. The top shows CDFs, the bottom shows tuning curves. Columns are different scenarios: pretraining Llama 33M on SlimPajama-6B (cross-entropy), finetuning DeBERTaV3 on MultiNLI (error rate), and training ResNet18 on ImageNet (error rate). The \textit{asymptotic regime} is the threshold beyond which the noisy quadratic was fit. Estimates use 48 iterations of search. \textit{The noisy quadratic distribution gives tighter bands for the tuning curve at the same confidence.}
    }
    \label{fig:estimating-and-extrapolating-tuning-curves}
\end{figure}

We found simple structure in the hyperparameter loss surface. But, how can we use it? We want better tools for designing experiments---for answering questions like: did a new method actually improve the model, or did we just improve its hyperparameters? To answer this, we can estimate the model's \textit{tuning curve}, or performance as a function of tuning effort \citep{dodge-etal-2019-show, lourie-etal-2024-show}; however, such estimates can be unreliable. With the noisy quadratic, we can construct better estimates and confidence bands. In particular, we can \textit{extrapolate} how the score might improve as we continue hyperparameter search.

To explore this use case, we subsample 48 search iterations in each of the three scenarios. For each subsample, we plotted the eCDF to visually determine a threshold for the asymptotic regime. We chose thresholds based on where the eCDF began to show the expected structure. We estimated thresholds of 3.66 for Llama, 0.104 for DeBERTaV3, and 0.302 for ResNet18. Then, we fit the noisy quadratic distribution to these subsamples using the thresholds.

Figure~\ref{fig:estimating-and-extrapolating-tuning-curves} compares the parametric estimates against nonparametric baselines from \citet{lourie-etal-2024-show}. The parametric point estimates smooth out their nonparametric counterparts. This makes sense as both attempt to fit the same data: when the sample is not representative, they err in a similar way. This variation underscores the need for confidence bands. There, the approaches give dramatically different results. The parametric bands enclose the ground truth when the nonparametric ones do;\footnote{At 80\% confidence, at least 1 of the 3 bands will fail to contain the ground truth 48.8\% of the time.} however, while the nonparametric bands become trivial after 8 iterations, the parametric bands extrapolate past all 48 used to construct them.

\section{Related Work}
\label{sec:related-work}

Hyperparameters have always been an essential part of deep learning \citep{orr1998neural, montavon2012neural}; however, foundation models pose new challenges due to their cost.

Some researchers have sought theoretical solutions, such as $\mu$Transfer \citep{yang2021tuning}. $\mu$Transfer reparametrizes several important hyperparameters, such as the learning rate or initialization variance, so that their optimal values stay constant across scales. When successful, this lets you find hyperparameters at small scales and transfer to large ones.

To balance resources at scale, researchers turn to empirical scaling laws \citep{hestness-etal-2017-deep, rosenfeld2020a, kaplan-etal-2020-scaling, hoffman2022training}. Scaling laws predict how loss improves as more resources become available. The first scaling laws revealed that loss has a power law relationship with parameters and data \citep{rosenfeld2020a, kaplan-etal-2020-scaling}. Since then, researchers have discovered scaling laws for many other quantities \citep{liu2024scaling, ludziejewski24a, kumar2025scaling}. Like our work, scaling laws find structure in the loss surface---most often a power law, which is equivalent to linear structure on log scales. Unlike our work, scaling laws focus on specific inputs. Aside from scale, there are many hyperparameters that impact performance.

Several authors have explored structure in the hyperparameter loss surface \citep{pushak2018algorithm}. \citet{pushak2022automl} look at AutoML pipelines and find that, while researchers typically use complex optimization algorithms, the loss surface is often unimodal or even convex. Conversely, \citet{sohldickstein2024boundaryneuralnetworktrainability} discovers intricate, fractal-like structure at the boundary of where training succeeds for neural networks. In contrast to prior work, we focus exclusively on the hyperparameters around the optimum---the area of most practical interest. By focusing there, we are able to uncover the quadratic structure with normal noise, a structure that is far more regular than those previously found.

We uncover this structure via a limit theorem for random search. Similar limits are explored in \textit{extreme value theory} \citep{coles2001introduction, haan2007extreme}. For example, the Pickands-Balkema-De Haan theorem gives conditions under which the tail of a distribution converges to a generalized Pareto distribution \citep{pickands1975statistical, balkema1974residual}, which relates closely to the (noiseless) quadratic. Rather than the general theorems of extreme value theory, we analyze a specific mechanism in order to build an empirical theory of the hyperparameter loss surface.

\section{Conclusion}
\label{sec:conclusion}

The hyperparameter loss surface has simple structure near the optimum: it is approximately quadratic with additive normal noise. Surprisingly, this structure describes the surface quite well and holds over a large region about the optimum---up to 57\% of the search space.

Using this structure, we derived a theory of random search in deep learning. We developed a parametric family for the score distribution's tail. This family comes in two forms: the quadratic in the deterministic case, and the noisy quadratic in the stochastic one.

The noisy quadratic distribution generalizes the first, and has four interpretable parameters:\footnote{When maximizing instead of minimizing, the roles of $\alpha$ and $\beta$ are reversed.} $\alpha$, the average performance of the best possible hyperparameters, $\beta$, a measure of the probability in the asymptotic regime, $\gamma$, the effective number of hyperparameters, and $\sigma$, the scale of the noise due to random seeds. These parameters correspond to characteristics of the loss surface. Thus, our theory lets you reason about random search based on properties of the loss surface, but also \textit{to reason about the loss surface based on the behavior of random search}.

We studied the loss surface to design better tools for deep learning experiments. While hyperparameter tuning is an orthogonal goal, our discoveries might suggest more efficient algorithms, e.g. Bayesian optimization kernels that exploit the quadratic-normal structure.

Our theory offers new tools for deep learning and foundation model research. So that others may use them, we make them available at: \url{https://github.com/nicholaslourie/opda}.

\section*{Acknowledgments}
\label{sec:acknowledgments}

We thank the anonymous reviewers for their valuable feedback. In addition, we thank Wanmo Kang for helpful suggestions while developing this work. This work was supported by the Institute of Information \& Communications Technology Planning \& Evaluation (IITP) with a grant funded by the Ministry of Science and ICT (MSIT) of the Republic of Korea in connection with the Global AI Frontier Lab International Collaborative Research. This work was also supported by the Samsung Advanced Institute of Technology (under the project Next Generation Deep Learning: From Pattern Recognition to AI) and the National Science Foundation (under NSF Award 1922658). This work was supported in part through the NYU IT High Performance Computing resources, services, and staff expertise.

\bibliography{references}
\bibliographystyle{colm2025_conference}

\appendix

\clearpage

\section{The Quadratic Distribution}
\label{app:the-quadratic-distribution}

As derived in \S\ref{sec:a-theory-based-on-simple-structure:the-deterministic-case}, the formulas for the \textit{convex quadratic distribution} are:

\begin{equation}\label{eq:convex-quadratic-distribution_cdf}
    F(y; \alpha, \beta, \gamma) \coloneqq \left(\frac{y - \alpha}{\beta - \alpha}\right)^{\sfrac{\gamma}{2}}
\end{equation}

\begin{equation}\label{eq:convex-quadratic-distribution_pdf}
    f(y; \alpha, \beta, \gamma) = \frac{\gamma}{2(\beta - \alpha)} \left(\frac{y - \alpha}{\beta - \alpha}\right)^{\frac{\gamma - 2}{2}}
\end{equation}

The equivalent formulas for the \textit{concave quadratic distribution} are:

\begin{equation}\label{eq:concave-quadratic-distribution_cdf}
    F(y; \alpha, \beta, \gamma) \coloneqq 1 - \left(\frac{\beta - y}{\beta - \alpha}\right)^{\sfrac{\gamma}{2}}
\end{equation}

\begin{equation}\label{eq:concave-quadratic-distribution_pdf}
    f(y; \alpha, \beta, \gamma) = \frac{\gamma}{2(\beta - \alpha)} \left(\frac{\beta - y}{\beta - \alpha}\right)^{\frac{\gamma - 2}{2}}
\end{equation}

The quadratic distribution is supported only on the interval $\alpha \leq y \leq \beta$. These formulas are valid within that interval. Outside of it, the density is 0; below it, the CDF is 0; above it, the CDF is 1. The quadratic distribution family is a special case of the four parameter beta distribution, and closely relates to its other special case, the power function distribution.

For more derivations, properties, and discussion of the quadratic distribution, see \href{https://nbviewer.org/github/nicholaslourie/opda/blob/v0.8.0/nbs/theory/parametric-analysis.ipynb\#The-Quadratic-Distribution}{The Quadratic Distribution} section of the \href{https://github.com/nicholaslourie/opda/blob/v0.8.0/nbs/theory/parametric-analysis.ipynb}{Parametric Analysis} notebook in \href{https://github.com/nicholaslourie/opda/tree/v0.8.0}{opda (v0.8.0)}.

\section{The Noisy Quadratic Distribution}
\label{app:the-noisy-quadratic-distribution}

As derived in \S\ref{sec:a-theory-based-on-simple-structure:the-stochastic-case}, the formulas for the \textit{convex noisy quadratic distribution} are:

\begin{equation}\label{eq:convex-noisy-quadratic-distribution_cdf}
    F(y; \alpha, \beta, \gamma, \sigma) = \Phi\left(\frac{y - \beta}{\sigma}\right) + \E_0^1\left[V^{\sfrac{\gamma}{2}}\right],\qquad V \sim \normal\left(\frac{y - \alpha}{\beta - \alpha}, \frac{\sigma}{\beta - \alpha}\right)
\end{equation}

\begin{equation}\label{eq:convex-noisy-quadratic-distribution_pdf}
    f(y; \alpha, \beta, \gamma, \sigma) = \frac{\gamma}{2(\beta - \alpha)} \E_0^1\left[V^{\frac{\gamma - 2}{2}}\right],\qquad V \sim \normal\left(\frac{y - \alpha}{\beta - \alpha}, \frac{\sigma}{\beta - \alpha}\right)
\end{equation}

The equivalent formulas for the \textit{concave noisy quadratic distribution} are:

\begin{equation}\label{eq:concave-noisy-quadratic-distribution_cdf}
    F(y; \alpha, \beta, \gamma, \sigma) = \Phi\left(\frac{y - \alpha}{\sigma}\right) - \E_0^1\left[V^{\sfrac{\gamma}{2}}\right],\qquad V \sim \normal\left(\frac{\beta - y}{\beta - \alpha}, \frac{\sigma}{\beta - \alpha}\right)
\end{equation}

\begin{equation}\label{eq:concave-noisy-quadratic-distribution_pdf}
    f(y; \alpha, \beta, \gamma, \sigma) = \frac{\gamma}{2(\beta - \alpha)} \E_0^1\left[V^{\frac{\gamma - 2}{2}}\right],\qquad V \sim \normal\left(\frac{\beta - y}{\beta - \alpha}, \frac{\sigma}{\beta - \alpha}\right)
\end{equation}

Unlike the quadratic distribution, the noisy quadratic is supported on the entire real line.

For more derivations, properties, and discussion of the noisy quadratic distribution, see \href{https://nbviewer.org/github/nicholaslourie/opda/blob/v0.8.0/nbs/theory/parametric-analysis.ipynb\#The-Noisy-Quadratic-Distribution}{The Noisy Quadratic Distribution} section of the \href{https://github.com/nicholaslourie/opda/blob/v0.8.0/nbs/theory/parametric-analysis.ipynb}{Parametric Analysis} notebook in \href{https://github.com/nicholaslourie/opda/tree/v0.8.0}{opda (v0.8.0)}.

\section{Models and Search Distributions}
\label{app:models-and-search-distributions}

As described in \S\ref{sec:experimental-setup}, we use random search results for Llama 33M, DeBERTaV3, and ResNet18.

For Llama 33M, we used the following search distribution:

\begin{align*}
    \mathtt{lr} &\sim \operatorname{LogUniform}(1e{-}5, 1e{-1}) \\
    \mathtt{beta1} &\sim \operatorname{Uniform}(0.7, 1) \\
    \mathtt{beta2} &\sim \operatorname{Uniform}(0.8, 1) \\
    \mathtt{warmup\_steps} &\sim \operatorname{DiscreteUniform}(0, 3000) \\
    \mathtt{weight\_decay} &\sim \operatorname{LogUniform}(1e{-}4, 1e0) \\
    \mathtt{dropout} &\sim \operatorname{Uniform}(0, 0.1)
\end{align*}

For DeBERTaV3, we used the following search distribution in \citet{lourie-etal-2024-show}:

\begin{align*}
    \mathtt{batch\_size} &\sim \operatorname{DiscreteUniform}(16, 64) \\
    \mathtt{num\_epochs} &\sim \operatorname{DiscreteUniform}(1, 4) \\
    \mathtt{warmup\_proportion} &\sim \operatorname{Uniform}(0, 0.6) \\
    \mathtt{learning\_rate} &\sim \operatorname{LogUniform}(1e{-}6, 1e{-3}) \\
    \mathtt{dropout} &\sim \operatorname{Uniform}(0, 0.3)
\end{align*}

Note that $\mathtt{warmup\_proportion}$ is the proportion of the first epoch only.

For ResNet18, we used the following search distribution:

\begin{align*}
    \mathtt{epochs} &\sim \operatorname{DiscreteUniform}(20, 100) \\
    \mathtt{batch\_size} &\sim \operatorname{DiscreteUniform}(\{128, 256, 512, 1024\}) \\
    \mathtt{lr} &\sim \operatorname{LogUniform}(5e{-}3, 5e1) \\
    \mathtt{proportion} &\sim \operatorname{Uniform}(0, 0.8) \\
    \mathtt{lr\_peak\_epoch} &= \lfloor \mathtt{proportion}\times\mathtt{epochs}\rfloor \\
    \mathtt{momentum} &\sim \operatorname{Uniform}(0.7, 1.0) \\
    \mathtt{weight\_decay} &\sim \operatorname{LogUniform}(1e{-}6, 1e{-}3) \\
    \mathtt{label\_smoothing} &\sim \operatorname{Uniform}(0.0, 0.5) \\
    \mathtt{use\_blurpool} &\sim \operatorname{DiscreteUniform}(0, 1)
\end{align*}

In \S\ref{app:generalization-across-architectures}, we present additional results that validate our theory's generality and how it applies across architectures. For it, we compare DeBERTaV3 to DeBERTa \citep{he2021deberta}, both tuned using DeBERTaV3's search distribution above. We also run random search on AlexNet \citep{krizhevsky2012imagenet, krizhevsky2014one}, ResNet18 \citep{he2016deep}, and ConvNext Tiny \citep{liu2022convnet} using ResNet18's search distribution above, except fixing $\mathtt{use\_blurpool}$ to 0 because ConvNext does not use maxpool (or blurpool) layers and thus we can not consistently apply the hyperparameter to all three.

\section{Generalization Across Architectures}
\label{app:generalization-across-architectures}

While our theory is general, it is also asymptotic; thus, it is natural to ask: how quickly does the asymptotic approximation apply in practice? For Llama 33M, DeBERTaV3, and ResNet18 we saw the asymptotic regime covered 44\%, 57\%, and 34\% of the score distribution---applying from the first or second iteration of random search. Still, perhaps the asymptotic regime applies only because these architectures are so advanced, or the search spaces match them particularly well.

To investigate such questions, we compare ResNet18 with two other architectures: AlexNet \citep{krizhevsky2012imagenet, krizhevsky2014one} and ConvNext Tiny \citep{liu2022convnet}. AlexNet goes from ResNet into the past: many consider it the first major architecture of the current deep learning renaissance and, as such, it is considerably less advanced than ResNet---missing later innovations such as batch normalization or residual connections. On the other hand, ConvNext goes from ResNet into the future: it starts with the ResNet architecture and applies lessons learned from transformer-based models. We obtain 512, 495, and 512 iterations of random search for AlexNet, ResNet18, and ConvNext, using the same search distribution across all three (see \S\ref{app:models-and-search-distributions}). By using the same search distribution across all three models, we guarantee it is not unusually well-suited to any specific one.

Similarly, we use the same search space across DeBERTa \citep{he2021deberta} and DeBERTaV3, for which we obtain 1,024 search iterations each. Figures~\ref{fig:generalization-across-architectures_vision}~and~\ref{fig:generalization-across-architectures_language} present the results.

\begin{figure}[t]
    \centering
    \includegraphics[width=\linewidth]{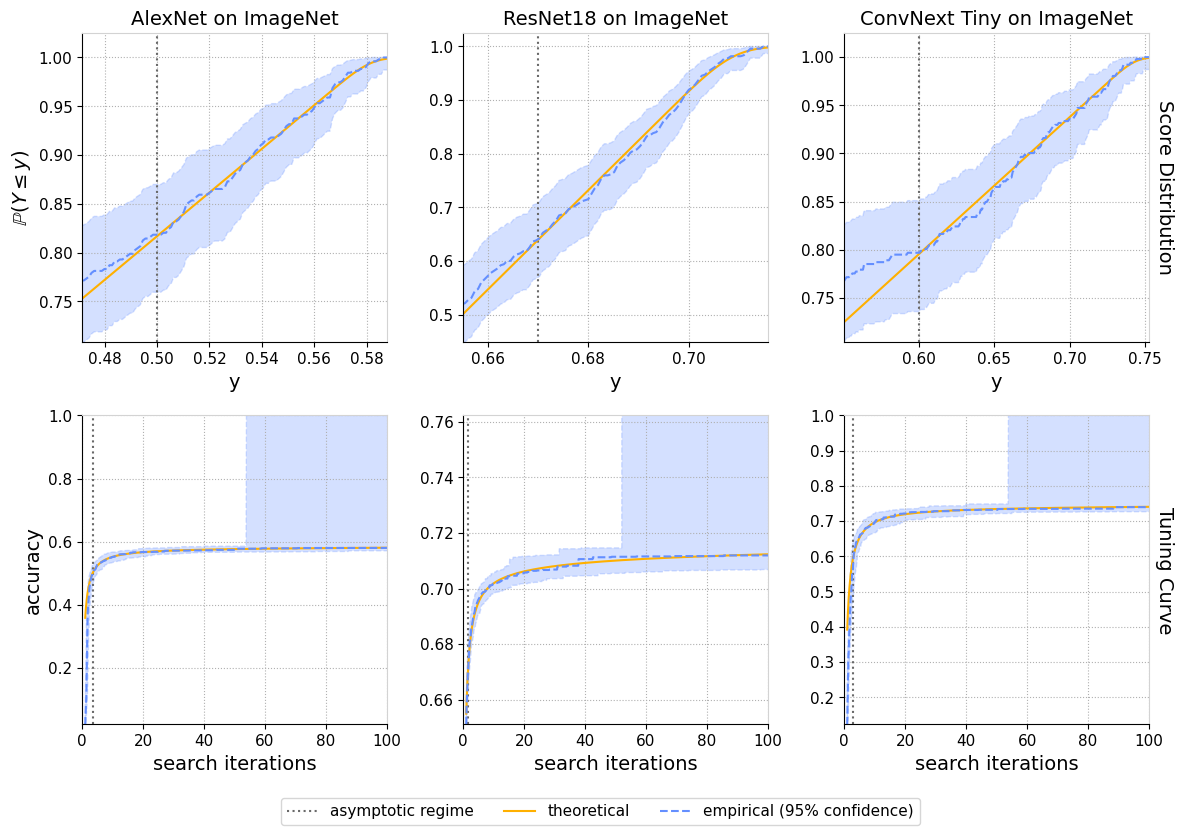}
    \caption{
        A comparison of the noisy quadratic (\textit{theoretical}) and the score distribution (\textit{empirical}) across different architectures trained on ImageNet. Each column corresponds to a different model: AlexNet, ResNet18, and ConvNext. All models use the same search distribution, and the estimates use 512, 495, and 512 iterations of random search, respectively. Empirical estimates are from the empirical distribution, while theoretical estimates use the noisy quadratic fitted to the tail via censored maximum spacing estimation. \textit{For all three models tuned with the same search distribution, the asymptotic regime covers a large fraction of the search space and the noisy quadratic demonstrates an excellent fit to the score distribution.}
    }
    \label{fig:generalization-across-architectures_vision}
\end{figure}

\begin{figure}[t]
    \centering
    \includegraphics[width=0.66\linewidth]{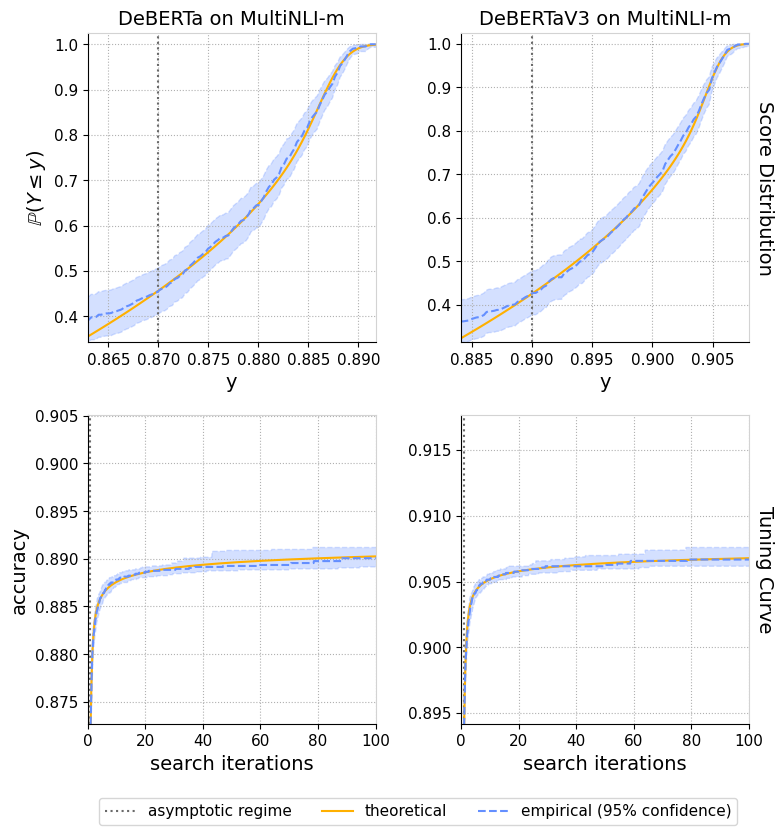}
    \caption{
        A comparison of the noisy quadratic (\textit{theoretical}) and the score distribution (\textit{empirical}) across different models finetuned on MultiNLI. Each column corresponds to a different one: DeBERTa and DeBERTaV3. Both models use the same search distribution, and the estimates each use 1,024 iterations of random search. Empirical estimates are from the empirical distribution, while theoretical estimates use the noisy quadratic fitted to the tail via censored maximum spacing estimation. \textit{For both models tuned with the same search distribution, the asymptotic regime covers a large fraction of the search space and the noisy quadratic demonstrates an excellent fit to the score distribution.}
    }
    \label{fig:generalization-across-architectures_language}
\end{figure}

\paragraph{The asymptotic regime is large in practice.} Across all architectures, the asymptotic regime is more than large enough to be practically relevant. Of the search distributions, it covers 18\% for AlexNet, 36\% for ResNet18,\footnote{In this experiment $\mathtt{use\_blurpool}$ is fixed to 0, which changes the asymptotic regime for ResNet18.} and 20\% for ConvNext Tiny as well as 54\% for DeBERTa and 57\% for DeBERTaV3. In other words, it characterizes the tuning curve after 1-4 iterations of random search. Thus, our theory describes random search with a realistic budget. The search distribution can not be the driving factor behind this result because we use the same one across each set of architectures. Moreover, while the better architectures display larger asymptotic regimes (e.g., ResNet18 and ConvNext), our theory even describes an older less advanced architecture like AlexNet after just a handful of search iterations.

\paragraph{The effective number of hyperparameters is stable across architectures.} An interesting thing happens when we use the same search space across the different architectures: the effective number of hyperparameters ($\gamma$) remains constant. For AlexNet, ResNet, and ConvNext Tiny, the estimate of $\gamma$ is 2. DeBERTa and DeBERTaV3 exhibit a similar phenomenon: both models have $\gamma = 1$. This result suggests an intuitive conclusion: the effective number of hyperparameters seems to be more a property of the search space, i.e. the hyperparameters themselves. Thus, it exhibits some stability across models.

\paragraph{Convergence is not necessary.} In modern deep learning, training is often limited by compute. As a result, our theory must apply even when the network is not trained to convergence. Fortunately, the ConvNext Tiny results demonstrate this to be true. Despite its name, ConvNext Tiny is significantly larger than ResNet18 (29M vs 12M parameters)---instead, it is more comparable to ResNet50. As our training recipe was chosen for ResNet18, it does not use enough compute (epochs) for ConvNext Tiny to fully converge. This fact is evident in the best accuracy achieved: 74.9\% as opposed to 82.1\% in \citet{liu2022convnet}. Even in this compute-limited regime, the theory still obtains an excellent fit.

\section{Deriving the Quadratic Distribution}
\label{app:deriving-the-quadratic-distribution}

Recall \S\ref{sec:a-theory-based-on-simple-structure:the-deterministic-case}, we wish to derive a limiting form for the score distribution's tail. Near the minimum, we approximate the hyperparameter loss surface, $g(\bx)$, by a Taylor polynomial and we approximate the search distribution by a uniform.

At the minimum, $(\bx_*, y_*)$, the gradient is $\pmb{0}$ and the Hessian is positive semi-definite with eigenvalues $\lambda_i \geq 0$. Without loss of generality, let the nonzero eigenvalues be $\lambda_1, \ldots, \lambda_{d_*}$. Since the Hessian is real and symmetric, it is diagonalizable: $H_{\bx_*} = Q \Lambda Q^T$, where $Q$ is orthogonal and $\Lambda$ is the diagonal matrix of eigenvalues.

The second order Taylor polynomial can be written as:
\begin{align*}
    g(\bx)
        &\approx y_* + \frac{1}{2} (\bx - \bx_*)^T H_{\bx_*} (\bx - \bx_*) \\
        &= y_* + \frac{1}{2} (\bx - \bx_*)^T Q \Lambda Q^T (\bx - \bx_*) \\
        &= y_* + \frac{1}{2} \left(\bx' - \bx'_*\right)^T \Lambda \left(\bx' - \bx'_*\right) \\
        &= y_* + \frac{1}{2} \sum_{j=1}^{d_*} \lambda_i \left(x'_j - x'_{*j}\right)^2 \\
\end{align*}
where $\bx' = Q^T \bx$, and we think of $Q^T$ as a change of coordinates. Since $Q^T$ is orthogonal, if $\bX$ is (approximately) uniform then so is $\bX' = Q^T \bX$.

Consider the event $Y = g(\bX) \leq y$. Rearranging the Taylor approximation, we obtain:
$$ \sum_{j=1}^{d_*} \frac{\lambda_i}{2}\left(\frac{x'_j - x'_{*j}}{\sqrt{y - y_*}}\right)^2 \leq 1 $$
This formula is the equation for an ellipsoid along the nonzero eigenvectors. Effectively, we marginalize over the null eigenvectors, along which we assume the loss surface is approximately constant.

The volume of this ellipsoid is proportional to $\sqrt{y - y_*}$ raised to the dimension, or $\left(y - y_*\right)^{\sfrac{d_*}{2}}$. Since the search distribution is approximately uniform, the probability of the event $Y = g(\bX) \leq y$ is then proportional to this volume:
$$ F(y) = \Prob(Y \leq y) \propto (y - y_*)^{\sfrac{d_*}{2}} $$
For an even more detailed discussion, simulations, and visualizations of the convergence, see the \href{https://nbviewer.org/github/nicholaslourie/opda/blob/v0.8.0/nbs/theory/parametric-analysis.ipynb\#Approximating-the-Tail}{Approximating the Tail} section of the \href{https://github.com/nicholaslourie/opda/blob/v0.8.0/nbs/theory/parametric-analysis.ipynb}{Parametric Analysis} notebook in \href{https://github.com/nicholaslourie/opda/tree/v0.8.0}{opda (v0.8.0)}.

\section{Proofs \& Theorems}
\label{app:proofs-and-theorems}

In \S\ref{sec:a-theory-based-on-simple-structure} and \S\ref{app:deriving-the-quadratic-distribution}, we derived our results without emphasizing formality. We did this for two reasons. First, there are many ways to formalize the theorem---without a particular goal, any specific choice is arbitrary. Second, whether the limit applies in practice is ultimately an empirical question. Consider the normal distribution: numerous versions of the central limit theorem exist, each applying in its own context. What is important is not that one set of conditions produces the normal distribution, but that many do. Therefore, we expect it might appear and, accordingly, use diagnostics like normal probability plots to determine if it has. That in mind, we now illustrate one way to formalize things.

\subsection{The Deterministic Case}
\label{app:proofs-and-theorems:the-deterministic-case}

We prove a limit theorem for minimization via random search in the deterministic case.

First, we need the following proposition, which gives a kind of inverse continuity near the minimum:

\begin{prop}\label{prop:inverse-continuity-near-the-minimum}
    Let $\bbX \subset \R^d$ be compact, $\bbY \subset \R$, $g: \bbX \to \bbY$ continuous, and $y_* = g(\bx_*)$ its unique minimum. Then $\forall \delta > 0$, $\exists \epsilon$ such that $|g(\bx) - y_*| < \epsilon$ implies $\|\bx - \bx_*\| < \delta$.
\end{prop}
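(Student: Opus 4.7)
The natural route is proof by contradiction using sequential compactness, leveraging the three hypotheses in turn: compactness to extract a convergent subsequence, continuity to pass a limit through $g$, and uniqueness of the minimizer to derive the contradiction. This is a standard topological argument; essentially, we are showing that $g$'s sublevel sets shrink to $\{\bx_*\}$ as the level approaches $y_*$.

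\textbf{Steps.} First I would negate the conclusion: suppose there exists $\delta > 0$ such that for every $\epsilon > 0$ we can find $\bx \in \bbX$ with $|g(\bx) - y_*| < \epsilon$ yet $\|\bx - \bx_*\| \geq \delta$. Specializing to $\epsilon_n = 1/n$ produces a sequence $\{\bx_n\}$ in the set $K_\delta := \{\bx \in \bbX : \|\bx - \bx_*\| \geq \delta\}$. Next I would note that $K_\delta$ is closed (as the intersection of $\bbX$ with the closed complement of the open ball $B(\bx_*, \delta)$) and contained in the compact set $\bbX$, hence compact itself. Sequential compactness of $K_\delta$ yields a subsequence $\bx_{n_k} \to \bx'$ with $\bx' \in K_\delta$, so in particular $\|\bx' - \bx_*\| \geq \delta > 0$ and thus $\bx' \neq \bx_*$.

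Then I would apply continuity of $g$ to conclude $g(\bx_{n_k}) \to g(\bx')$. On the other hand, by construction $|g(\bx_{n_k}) - y_*| < 1/n_k \to 0$, so $g(\bx_{n_k}) \to y_*$. Uniqueness of limits in $\R$ gives $g(\bx') = y_*$, which together with $\bx' \neq \bx_*$ contradicts the assumption that $\bx_*$ is the \emph{unique} minimizer. This contradiction establishes the proposition.

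\textbf{Main obstacle.} There is no real obstacle here; the argument is a textbook application of sequential compactness plus continuity. The only subtle point worth writing carefully is making explicit that $K_\delta$ inherits compactness from $\bbX$ (so that the extracted limit $\bx'$ still lies at distance at least $\delta$ from $\bx_*$), since without that the uniqueness step fails. Everything else is routine.
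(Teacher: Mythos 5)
Your proposal is correct and follows essentially the same argument as the paper: contradiction, a sequence $\bx_n$ with $g(\bx_n)\to y_*$ but $\|\bx_n-\bx_*\|\geq\delta$, a convergent subsequence by compactness, continuity to get $g(\bx')=y_*$, and uniqueness of the minimizer for the contradiction. Your explicit observation that the limit stays in the closed set $K_\delta$ (so $\bx'\neq\bx_*$) is the one step the paper leaves implicit, but the route is identical.
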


\begin{proof}
    For contradiction, assume $\delta > 0$ is such that the conclusion is false. Let $\epsilon_i$ be any sequence such that $\epsilon_i \to 0$. For each $\epsilon_i$, there exists some $\bx_i$ such that $|g(\bx_i) - y_*| < \epsilon_i$ but $\|\bx_i - \bx_*\| > \delta$, otherwise the conclusion would be true.

    Consider the sequence $\bx_i$. Since $\bbX$ is compact, it has a convergent subsequence: $\bx_{i_k} \to \bx_\infty$. By construction, $|g(\bx_{i_k}) - y_*| < \epsilon_{i_k}$. As $\epsilon_i \to 0$, we have $g(\bx_{i_k}) \to y_*$, and because $g$ is continuous:
    $$ g(\bx_\infty) = g\left(\lim_{i_k\to\infty}\bx_{i_k}\right) = \lim_{i_k\to\infty} g(\bx_{i_k}) = y_* $$
    However, $\|\bx_{i_k} - \bx_*\| > \delta$ so $\bx_\infty \not= \bx_*$, contradicting uniqueness of the minimum.
\end{proof}

\begin{thm}\label{thm:limit-theorem-for-random-search_deterministic-case}
    Let $\bbX \subset \R^d$ be compact, $\bbY \subset \R$, $g: \bbX \to \bbY$ thrice continuously differentiable, $y_* = g(\bx_*)$ its unique minimum in the interior of $\bbX$, $H_{\bx_*}$ the Hessian at $\bx_*$ having full rank, and $\bX \sim \cX$ a distribution over $\bbX$ with continuous PDF, $\mu(\bx)$. If $Y = g(\bX)$ is a random variable with CDF $F(y)$, there exists a quadratic distribution with CDF $Q(y)$ such that $ \lim_{y\to y_*} F(y) / Q(y) = 1 $.
\end{thm}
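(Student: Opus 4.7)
The plan is to combine a Taylor expansion of $g$ about $\bx_*$ with a local-constancy argument for the density $\mu$, using the standard ellipsoid volume formula to identify the leading behavior of $F(y)$ as $y \to y_*$. Since $g$ is thrice continuously differentiable, Taylor's theorem yields constants $K, \delta > 0$ and a neighborhood $B_\delta(\bx_*)$ on which $|g(\bx) - y_* - \tfrac{1}{2}(\bx-\bx_*)^{T} H_{\bx_*}(\bx-\bx_*)| \leq K\|\bx-\bx_*\|^3$. The resulting quadratic-plus-cubic-remainder form should control the sublevel set $S_y \coloneqq \{\bx : g(\bx) \leq y\}$ tightly enough that $\Prob(\bX \in S_y)$ matches the ellipsoid volume to leading order.

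First I would apply Proposition~\ref{prop:inverse-continuity-near-the-minimum} to localize: for $y$ sufficiently close to $y_*$, $S_y \subset B_\delta(\bx_*)$. Since $H_{\bx_*}$ has full rank and is positive semi-definite, its smallest eigenvalue $\lambda_{\min}$ is strictly positive, so the Taylor lower bound $g(\bx) - y_* \geq \tfrac{\lambda_{\min}}{2}\|\bx-\bx_*\|^2 - K\|\bx-\bx_*\|^3$ combined with $\bx \in S_y$ yields $\|\bx - \bx_*\| = O(\sqrt{y-y_*})$. This makes the cubic remainder $O((y-y_*)^{3/2})$ uniformly over $S_y$, which in turn forces the sandwich $E^-(y) \subseteq S_y \subseteq E^+(y)$ where $E^{\pm}(y) \coloneqq \{\bx : \tfrac{1}{2}(\bx-\bx_*)^{T} H_{\bx_*}(\bx-\bx_*) \leq (y-y_*) \pm c(y-y_*)^{3/2}\}$ for a suitable constant $c$.

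Next, the standard ellipsoid volume formula gives $\operatorname{Vol}(E^\pm(y)) = V_d (y-y_*)^{d/2} (1 \pm O(\sqrt{y-y_*}))$, where $V_d = (2\pi)^{d/2} / (\Gamma(d/2+1) \sqrt{\det H_{\bx_*}})$. Continuity of $\mu$ together with localization gives $\sup_{\bx \in S_y}|\mu(\bx) - \mu(\bx_*)| \to 0$, so integrating yields $F(y) = \mu(\bx_*) V_d (y-y_*)^{d/2}(1 + o(1))$. Choosing the quadratic distribution with $\alpha = y_*$, $\gamma = d$, and $\omega = \mu(\bx_*) V_d$ (equivalently, $\beta = y_* + (\mu(\bx_*) V_d)^{-2/d}$) then produces $Q(y) = \mu(\bx_*) V_d (y-y_*)^{d/2}$ on a right-neighborhood of $y_*$, so that $F(y)/Q(y) \to 1$.

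The main obstacle is the bookkeeping that makes the sandwich sharp. Two independent error sources---the cubic Taylor remainder and the variation of $\mu$ over $S_y$---must each be shown to contribute only a $(1 + o(1))$ multiplicative factor, so that their product still allows $F/Q \to 1$. The key lever is the bound $\|\bx - \bx_*\| = O(\sqrt{y - y_*})$ on $S_y$, which simultaneously tames the cubic remainder (making the two ellipsoid radii differ by a multiplicative $1 + O(\sqrt{y-y_*})$ factor) and drives the modulus of continuity of $\mu$ to zero. Once that bound is established, everything else reduces to expanding $(1 \pm O(\sqrt{y-y_*}))^{d/2}$ and absorbing it into the $o(1)$ term.
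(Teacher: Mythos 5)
Your proposal is correct and follows essentially the same route as the paper's proof: localize via Proposition~\ref{prop:inverse-continuity-near-the-minimum}, sandwich the sublevel set between ellipsoids using the third-order Taylor remainder, apply the ellipsoid volume formula, and absorb the variation of $\mu$ near $\bx_*$ into a multiplicative $(1+o(1))$ factor. Your explicit bound $\|\bx - \bx_*\| = O(\sqrt{y - y_*})$ (from $\lambda_{\min} > 0$) is a slightly more careful way of justifying the paper's claim that the Taylor error is third order while $y - y_*$ is second order, but the argument is the same.
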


\begin{proof}
    Write the 2nd order Taylor approximation of $g$ at $\bx_*$ as $t(\bx) = y_* + \sfrac{1}{2} (\bx - \bx_*)^T H_{\bx_*} (\bx - \bx_*)$. Consider some neighborhood of $\|\bx - \bx_*\| < \delta$. By Proposition~\ref{prop:inverse-continuity-near-the-minimum}, we can require $y$ be sufficiently close to $y_*$ to guarantee $\bx$ is in it. Throughout the neighborhood, let $\epsilon$ be the Taylor approximation's worst case error:
    \begin{equation}\label{eq:taylor-bound-on-function}
        t(\bx) - \epsilon < g(\bx) < t(\bx) + \epsilon
    \end{equation}
    Consider $F(y) = \Prob(Y\leq y)$. By Equation~\ref{eq:taylor-bound-on-function}, $\Prob(t(\bx)+\epsilon\leq y) \leq \Prob(g(\bx)\leq y) \leq \Prob(t(\bx)-\epsilon\leq y)$. We can write this equivalently as:
    \begin{equation}\label{eq:taylor-bound-on-cdf}
        \Prob(t(\bx)\leq y-\epsilon_1) \leq \Prob(g(\bx)\leq y) \leq \Prob(t(\bx)\leq y+\epsilon_1)
    \end{equation}
    Let us analyze $\Prob(t(\bx) \leq y)$.

    We will need the fact that $\cX$ is approximately uniform near $\bx_*$. Let $c = \mu(\bx_*)$. As $\mu$ is continuous, $\mu(\bx)\to c$ as $\bx\to\bx_*$. Let $\eta$ be the maximum difference in the neighborhood:
    \begin{equation}\label{eq:continuity-bound-on-density}
        c - \eta < \mu(\bx) < c + \eta
    \end{equation}
    In this sense, we can think of $\cX$ as approximately uniform with density between $c \pm \eta$.

    Returning to the Taylor approximation, $g$ is thrice continuously differentiable so the Hessian is real symmetric thus diagonalizable: $H_{\bx_*} = U^T \Lambda U$, with $U$ an orthonormal matrix and $\Lambda = \operatorname{diag}(\lambda_1, \ldots, \lambda_d)$ the eigenvalues. Think of $U$ as a change of coordinates, $\bu = U \bx$. Since $U$ is orthonormal with $|\det U| = 1$, by the change of variables theorem the density of $\cX$ in these new coordinates is still approximately $c \pm \eta$.

    Finally, consider the event: $\Prob(t(\bx) \leq y)$. In the coordinates $\bu$, $H_{\bx_*}$ is a diagonal matrix and $t(\bu) = y_* + \sfrac{1}{2} \sum_{i=1}^d\lambda_i (u_i - u_{*i})^2$; therefore, $t(\bu) \leq y$ defines an ellipse:
    $$ \sum_{i=1}^d \frac{\lambda_i}{2} (u_i - u_{*i})^2 \leq y - y_* $$
    The volume of this ellipse is:
    $$ (y - y_*)^{\sfrac{d}{2}} \left(\frac{\pi^{\sfrac{d}{2}}}{\Gamma\left(\frac{d}{2} + 1\right)}\prod_{i=1}^d \sqrt{\frac{2}{\lambda_i}}\right) $$
    Take all the terms that do not depend on $y$ as a constant, $C$. The volume is then: $C(y - y_*)^{\sfrac{d}{2}}$. The probability $\Prob(t(\bx)\leq y)$ is the density integrated over this volume. The density is between $c - \eta$ and $c + \eta$, thus the probability is between products of these values and the volume:
    \begin{equation}\label{eq:continuity-bound-on-ellipse-probabilities}
        C(y - y_*)^{\sfrac{d}{2}} (c - \eta) < \Prob(t(\bx)\leq y) < C(y - y_*)^{\sfrac{d}{2}} (c + \eta)
    \end{equation}

    Combining Equations~\ref{eq:taylor-bound-on-cdf} and \ref{eq:continuity-bound-on-ellipse-probabilities}, we have:
    $$ C(y - \epsilon - y_*)^{\sfrac{d}{2}} (c - \eta) < \Prob(g(\bx)\leq y) < C(y + \epsilon - y_*)^{\sfrac{d}{2}} (c + \eta) $$
    Using the parametrization of the (convex) quadratic distribution's CDF as $Q(y) = \omega (y - \alpha)^{\sfrac{\gamma}{2}}$, let $\omega = Cc$, $\alpha = y_*$, and $\gamma = d$. Then dividing by $Q(y)$ we have:
    \begin{equation}\label{eq:bound-on-cdf-ratio}
        \frac{(c - \eta)}{c}\left(1 - \frac{\epsilon}{y - y_*}\right)^{\sfrac{d}{2}} < \frac{F(y)}{Q(y)} < \frac{(c + \eta)}{c} \left(1 + \frac{\epsilon}{y - y_*}\right)^{\sfrac{d}{2}}
    \end{equation}
    Consider what happens as $y\to y_*$. By Proposition~\ref{prop:inverse-continuity-near-the-minimum} the neighborhood about $\bx_*$ shrinks. As a result, $\eta\to 0$ and since $g$ is thrice differentiable the Taylor approximation's error goes to 0 at 3rd order while $y - y_*$ goes to 0 at 2nd order, thus $\epsilon/(y - y_*)\to 0$. Therefore, the upper and lower bounds in Equation~\ref{eq:bound-on-cdf-ratio} go to 1 and thus $F(y)/Q(y) \to 1$ as well. In other words:
    $$ \lim_{y\to y_*} \frac{F(y)}{Q(y)} = 1 $$
\end{proof}

Thus, we obtain a limit theorem for random search under minimization, maximization being similar.

A few remarks are in order. We have shown convergence under one set of conditions; however, convergence can happen under other conditions as well. For example, we used uniqueness of the minimum to ensure that as $y$ approaches $y_*$, the corresponding $\bx$ also approaches $\bx_*$, the center of our Taylor approximation. If a finite number of distinct minima exist, this condition still holds as we approach the global minimum. Even with multiple global minima, they can be added together without issue.\footnote{This reasoning also applies to discrete hyperparameters, which essentially multiply local minima.} For example, the volume of their ellipses will be: $\sum_{j=1}^{n} C_j (y - y_*)^{\sfrac{d}{2}} = (y - y_*)^{\sfrac{d}{2}} \sum_{j=1}^{n} C_j$. As this example shows, many variants of the theorem are possible.

One assumption in particular merits deeper discussion: that the Hessian is full rank. Empirically, this assumption is rarely true. In all our experiments, the effective number of hyperparameters was fewer than the nominal number---in other words, the Hessian was rank deficient. Here is one way to close this gap: if $g$ is constant along the kernel of the Hessian, then you can marginalize over the kernel and consider $g$ as a function of the quotient space, in which the Hessian will have full rank.

In the end, we just need the hyperparameter loss to be approximately quadratic in some coordinates for which the search distribution is approximately uniform. Designing the search space so these assumptions are better satisfied will speed up convergence. For example, you can search for each hyperparameter using a uniform distribution on the appropriate scale (e.g., a log scale for the learning rate). Similarly, you can tighten the search space around the optimum so the Taylor approximation is a better fit.

\subsection{The Stochastic Case}
\label{app:proofs-and-theorems:the-stochastic-case}

For the stochastic case, the noisy quadratic distribution is defined as the sum of a quadratic and a normal random variable. If the conditional mean $g(\bX) = \E[Y|\bX]$ satisfies the conditions of Theorem~\ref{thm:limit-theorem-for-random-search_deterministic-case}, then it will converge to a quadratic distribution. If in addition $Y = g(\bX) + E$, $E \sim \normal(0, \sigma)$ then one just needs $\sigma$ to be small enough, otherwise the noise ($E$) will contaminate points where the quadratic distribution is a good approximation with the points where it is a bad one.

We derive the formulas for the CDF and PDF. We will show them for minimization.

First, we will need the definition of the \textit{partial expectation from $a$ to $b$}:
\begin{equation}\label{eq:partial-expectation}
    \E_a^b[Z] \coloneqq \Prob(a\leq Z\leq b)\; \E[Z|a\leq Z\leq b] = \int_a^b z f_Z(z) dz
\end{equation}
Now, we will prove the formulas.

\begin{prop}\label{prop:cdf-of-the-noisy-quadratic}
    Let $Y = Q + E$, with $Q \sim \quadratic_{\min}(\alpha, \beta, \gamma)$ and $E \sim \normal(0, \sigma)$. If $F_Y(y)$ is the CDF of $Y$ then:
    $$ F_Y(y) = \Phi\left(\frac{y - \beta}{\sigma}\right) + \E_0^1\left[V^{\sfrac{\gamma}{2}}\right],\qquad V \sim \normal\left(\frac{y - \alpha}{\beta - \alpha}, \frac{\sigma}{\beta - \alpha}\right) $$
\end{prop}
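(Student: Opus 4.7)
The plan is to condition on $E$ and convolve with the quadratic distribution. Since $Q$ and $E$ are independent (as they must be for the noisy quadratic to be well-defined as their sum), write
\begin{equation*}
F_Y(y) = \Prob(Q + E \leq y) = \int_{-\infty}^{\infty} \Prob(Q \leq y - e)\, f_E(e)\, de,
\end{equation*}
where $f_E$ is the normal density with mean $0$ and standard deviation $\sigma$. The key observation is that $Q$ is supported on $[\alpha, \beta]$, so $\Prob(Q \leq y - e)$ is piecewise: it equals $1$ when $e < y - \beta$, equals $0$ when $e > y - \alpha$, and equals $\left(\frac{y-e-\alpha}{\beta-\alpha}\right)^{\gamma/2}$ in between, by Equation~\ref{eq:quadratic-distribution_cdf}.

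Splitting the integral along these three regions, the $\Prob = 1$ piece integrates to $\Phi\left(\frac{y-\beta}{\sigma}\right)$, matching the first term of the claimed formula. The $\Prob = 0$ piece vanishes. So it remains to show the middle piece
\begin{equation*}
I \;=\; \int_{y-\beta}^{y-\alpha} \left(\frac{y - e - \alpha}{\beta - \alpha}\right)^{\gamma/2} \frac{1}{\sigma}\,\phi\!\left(\frac{e}{\sigma}\right) de
\end{equation*}
equals $\E_0^1\!\left[V^{\gamma/2}\right]$ for $V \sim \normal\!\left(\frac{y-\alpha}{\beta-\alpha}, \frac{\sigma}{\beta-\alpha}\right)$.

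To do this I would substitute $v = \frac{y - e - \alpha}{\beta - \alpha}$, so that $e = y - \alpha - v(\beta - \alpha)$ and $de = -(\beta - \alpha)\, dv$. The limits $e = y - \beta$ and $e = y - \alpha$ map to $v = 1$ and $v = 0$ respectively, flipping orientation. After the substitution, the Gaussian factor becomes $\frac{\beta-\alpha}{\sigma}\,\phi\!\left(\frac{v(\beta-\alpha)-(y-\alpha)}{\sigma}\right)$, which---using the symmetry $\phi(-x) = \phi(x)$---is exactly the density of $V$ at $v$. Hence $I = \int_0^1 v^{\gamma/2} f_V(v)\, dv = \E_0^1[V^{\gamma/2}]$ by the definition of the partial expectation (Equation~\ref{eq:partial-expectation}), completing the proof.

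The only subtle step is the substitution: matching the rescaled Gaussian to the density of the specific $V$ with mean $(y-\alpha)/(\beta-\alpha)$ and standard deviation $\sigma/(\beta-\alpha)$ requires the symmetry of $\phi$ and careful bookkeeping of the Jacobian and the reversed limits. Everything else is routine. An alternative would be to condition on $Q$ instead, writing $F_Y(y) = \int_\alpha^\beta \Phi\!\left(\frac{y-q}{\sigma}\right) f_Q(q)\, dq$ and integrating by parts to move the derivative onto $\Phi$; this yields the same identity but is marginally messier because the boundary terms must be tracked.
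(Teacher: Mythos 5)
Your proposal is correct and follows essentially the same route as the paper's proof: the convolution identity $F_Y(y) = \E[F_Q(y-E)]$, the three-way split according to the support of $Q$ (yielding the $\Phi$ term from the region where $F_Q = 1$), and the change of variables $V = \frac{(y-\alpha)-E}{\beta-\alpha}$ to express the middle piece as the partial expectation $\E_0^1[V^{\sfrac{\gamma}{2}}]$. Your explicit bookkeeping of the Jacobian, the reversed limits, and the symmetry of $\phi$ simply spells out what the paper compresses into ``noting that $V$ is normally distributed with mean $(y-\alpha)/(\beta-\alpha)$ and standard deviation $\sigma/(\beta-\alpha)$.''
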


\begin{proof}
Let $F_Q(y)$ denote the CDF of $Q$. Since $Y$ is the sum of two independent random variables, we can apply the convolution formula for the CDF of a sum: $F_Y(y) = \E[F_Q(y - E)]$. Note that this expectation is taken over the normal variable, $E$. Recall:
$$
F_Q(y) =
\begin{cases}
        0 & y < \alpha \\
        \left(\frac{y - \alpha}{\beta - \alpha}\right)^{\sfrac{\gamma}{2}} & \alpha \leq y \leq \beta \\
        1 & y > \beta
\end{cases}
$$
Then, using properties of expectations, we have:
\begin{align*}
    F_Y(y) &= \E[F_Q(y - E)] \\
           &= \E_{-\infty}^{y - \beta}[1] + \E_{y - \beta}^{y - \alpha}\left[\left(\frac{(y - E) - \alpha}{\beta - \alpha}\right)^{\sfrac{\gamma}{2}}\right] + \E_{y - \alpha}^{\infty}[0] \\
           &= \Phi\left(\frac{y - \beta}{\sigma}\right) + \E_{y - \beta}^{y - \alpha}\left[\left(\frac{(y - \alpha) - E}{\beta - \alpha}\right)^{\sfrac{\gamma}{2}}\right] \\
\end{align*}
where $\Phi$ is the standard normal distribution's CDF. Applying the change of variables:
$$ V = \frac{(y - \alpha) - E}{\beta - \alpha} $$
and noting that $V$ is normally distributed with mean $(y - \alpha) / (\beta - \alpha)$ and standard deviation $\sigma / (\beta - \alpha)$, we obtain the desired formula:
\begin{equation}\label{eq:noisy-quadratic_cdf_moment-form}
    F_Y(y) = \Phi\left(\frac{y - \beta}{\sigma}\right) + \E_{0}^{1}\left[V^{\sfrac{\gamma}{2}}\right],\qquad V \sim \normal\left(\frac{y - \alpha}{\beta - \alpha}, \frac{\sigma}{\beta - \alpha}\right)
\end{equation}
\end{proof}

\begin{prop}\label{prop:pdf-of-the-noisy-quadratic}
    Let $Y = Q + E$, with $Q \sim \quadratic_{\min}(\alpha, \beta, \gamma)$ and $E \sim \normal(0, \sigma)$. If $f_Y(y)$ is the PDF of $Y$ then:
    $$ f_Y(y) = \frac{\gamma}{2(\beta - \alpha)} \E_0^1\left[V^{\frac{\gamma - 2}{2}}\right],\qquad V \sim \normal\left(\frac{y - \alpha}{\beta - \alpha}, \frac{\sigma}{\beta - \alpha}\right) $$
\end{prop}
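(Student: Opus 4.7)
The plan is to mirror the structure of Proposition~\ref{prop:cdf-of-the-noisy-quadratic}, but use the convolution formula for densities instead of for CDFs. Since $Q$ and $E$ are independent, $f_Y(y) = \E[f_Q(y - E)]$, where the expectation is taken over $E \sim \normal(0, \sigma)$. Recalling that the convex quadratic density is
\[
    f_Q(q) = \frac{\gamma}{2(\beta-\alpha)}\left(\frac{q-\alpha}{\beta-\alpha}\right)^{\frac{\gamma-2}{2}}
\]
for $\alpha \leq q \leq \beta$ and zero elsewhere, the integrand $f_Q(y-E)$ vanishes unless $E \in [y-\beta, y-\alpha]$. So the first step is to restrict the expectation to that interval and pull out the prefactor $\gamma/(2(\beta-\alpha))$.

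Next, I would apply exactly the change of variables used in the CDF proof, $V = ((y-\alpha) - E)/(\beta-\alpha)$, which maps the interval $E \in [y-\beta, y-\alpha]$ to $V \in [0,1]$. The power term becomes $V^{(\gamma-2)/2}$. What remains is to check that the induced density of $V$ matches the claim. Since $V$ is an affine transformation of $E \sim \normal(0,\sigma)$ with slope $-1/(\beta-\alpha)$ and intercept $(y-\alpha)/(\beta-\alpha)$, $V$ is normal with mean $(y-\alpha)/(\beta-\alpha)$ and standard deviation $\sigma/(\beta-\alpha)$. The sign flip from the slope is absorbed by the symmetry of the normal density, just as it was in the CDF proof. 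Rewriting the resulting integral as a partial expectation yields the desired formula.

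\textbf{Alternative and main obstacle.} A second route would be to differentiate the CDF formula from Proposition~\ref{prop:cdf-of-the-noisy-quadratic} with respect to $y$. Here one uses the identity $\partial f_V/\partial \mu = -\partial f_V/\partial v$ for a normal density with mean $\mu$, differentiates under the integral sign, integrates by parts to reduce the exponent from $\gamma/2$ to $(\gamma-2)/2$, and observes that the boundary term at $v=1$ exactly cancels the derivative of the $\Phi((y-\beta)/\sigma)$ term (again by symmetry of $\phi$). This route is elegant but more delicate. The main obstacle in either approach is the bookkeeping around the change of variables and the symmetry argument: one has to be careful that the negative Jacobian from $E \mapsto V$ swaps the integration limits rather than introducing a sign error, and that the resulting normal density for $V$ indeed has the stated mean and scale. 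Because this is the same manipulation that already succeeded in Proposition~\ref{prop:cdf-of-the-noisy-quadratic}, I would prefer the convolution route, which requires essentially no new ideas beyond that proposition.
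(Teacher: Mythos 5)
Your main route is exactly the paper's proof: the convolution identity $f_Y(y) = \E[f_Q(y-E)]$, restriction of the expectation to $E \in [y-\beta, y-\alpha]$ where the quadratic density is supported, extraction of the prefactor $\gamma/(2(\beta-\alpha))$, and the substitution $V = ((y-\alpha)-E)/(\beta-\alpha)$ yielding a normal $V$ with the stated mean and scale on $[0,1]$. The details you flag (the sign flip swapping the limits, the induced law of $V$) are handled the same way in the paper, so nothing further is needed.
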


\begin{proof}
Let $f_Q(y)$ denote the PDF of $Q$. By the convolution formula for the PDF of a sum we have: $f_Y(y) = \E[f_Q(y - E)]$. Note that this expectation is taken over the normal variable, $E$. Recall:
$$
f_Q(y) =
\begin{cases}
        0 & y < \alpha \\
        \frac{\gamma}{2(\beta - \alpha)} \left(\frac{y - \alpha}{\beta - \alpha}\right)^{\frac{\gamma-2}{2}} & \alpha \leq y \leq \beta \\
        0 & y > \beta \\
\end{cases}
$$
Then, using properties of expectations, we have:
\begin{align*}
    f_Y(y) &= \E[f_Q(y - E)] \\
           &= \E_{-\infty}^{y - \beta}[0] + \E_{y - \beta}^{y - \alpha}\left[\frac{\gamma}{2(\beta - \alpha)} \left(\frac{(y - E) - \alpha}{\beta - \alpha}\right)^{\frac{\gamma-2}{2}}\right] + \E_{y - \alpha}^{\infty}[0] \\
           &= \frac{\gamma}{2(\beta - \alpha)} \E_{y - \beta}^{y - \alpha}\left[\left(\frac{(y - \alpha) - E}{\beta - \alpha}\right)^{\frac{\gamma-2}{2}}\right] \\
\end{align*}
Applying the change of variables:
$$ V = \frac{(y - \alpha) - E}{\beta - \alpha} $$
and noting that $V$ is normally distributed with mean $(y - \alpha) / (\beta - \alpha)$ and standard deviation $\sigma / (\beta - \alpha)$, we obtain the desired formula:
\begin{equation}\label{eq:noisy-quadratic_pdf_moment-form}
    f_Y(y) = \frac{\gamma}{2(\beta - \alpha)} \E_{0}^{1}\left[V^{\frac{\gamma - 2}{2}}\right],\qquad V \sim \normal\left(\frac{y - \alpha}{\beta - \alpha}, \frac{\sigma}{\beta - \alpha}\right)
\end{equation}
\end{proof}

\end{document}